\title{Steps Toward Deep Kernel Methods from Infinite Neural Networks}
\author{Tamir Hazan \\ tamir.hazan@gmail.com \\ University of Haifa \\ Carmel \\  Haifa 31905, Israel
\and Tommi Jaakkola \\ tommi@mit.edu \\ CSAIL, MIT \\ Cambridge \\ MA 02139, USA
}
\newcommand{\lrangle}[1]{\langle #1 \rangle}
\DeclareMathOperator{\erf}{erf}
\DeclareMathOperator{\relu}{ReLU}
\DeclareMathOperator{\step}{step}
\newtheorem{theorem}{Theorem}
\newtheorem{corollary}{Corollary}
\renewcommand{\[}{\begin{eqnarray}}
\renewcommand{\]}{\end{eqnarray}}
\begin{document}

\maketitle
\begin{abstract}
Contemporary deep neural networks exhibit impressive results on practical problems. These networks generalize well although their inherent capacity may extend significantly beyond the number of training examples. We analyze this behavior in the context of deep, infinite neural networks. We show that deep infinite layers are naturally aligned with Gaussian processes and kernel methods, and devise stochastic kernels that encode the information of these networks. We show that stability results apply despite the size, offering an explanation for their empirical success. 
\end{abstract}

\section{Introduction}
\label{sec:introduction}

Deep neural networks have become widely adopted for tasks ranging from image labeling in computer vision to parsing and machine translation in natural language processing. The networks in these tasks usually consist of an input layer, several semi-structured intermediate layers and an output layer. Surprisingly, as large, complex models, they appear easier to learn at scale, rendering state of the art performance (e.g., \cite{Hinton12}) with increasing amounts of data and computation. The setting poses new questions for learning since the number of parameters in these models, mostly residing in the deep layers, may be substantially larger than what could be supported by the training examples. Many expect such networks to overfit while in practice they (often) do not, and their decision boundaries appear smooth. Our work suggests an explanation for this behavior based on deep and \emph{infinitely} wide networks where the number of parameters is uncountably infinite.   

Neural networks with a single infinite intermediate layer have been considered by various works. \cite{Hornik93} show these networks are universal approximators and \cite{Neal95, Williams97, Cho09} explore their properties in the context of Gaussian processes and kernel methods. Unfortunately, since these networks interact linearly with the input layer, they are limited in their representation power. Moreover, the recent success of neural networks seems to rely on deep architecture while current infinite networks only encode the information of a single layer. Lastly, these works do not explain why learning the likelihood of infinite networks does not overfit, and the decision boundary of the learned network is simple. 

In our work we extend the framework of kernel methods for infinite networks to multiple layers. We introduce stochastic kernels that are derived from Gaussian processes and encode the information of two infinite layers. We also provide a generalization bound for these networks, based on stability of regularized loss minimization, and attribute the simplicity of the learned deep infinite network to the fast convergence of algorithms on our learning framework. 

We begin by introducing infinite neural networks with a single intermediate layer. We relate their learning units to integrals over functions in the Euclidean space with respect to the Gaussian distribution, as well as describe their connections to kernel functions. We subsequently construct the second layer and relate its learning units to expectations with respect to a Gaussian process. These expectations form stochastic kernel functions that encode the multilayer and infinitely wide neural network. Finally, we analyze the generalization properties of these networks and introduce a method to incorporate localities and non-linearities such as those arising from convolutional neural networks.

\section{Background}
\label{sec:background}

\begin{figure}
\centerline{
\includegraphics[width=1.5in]{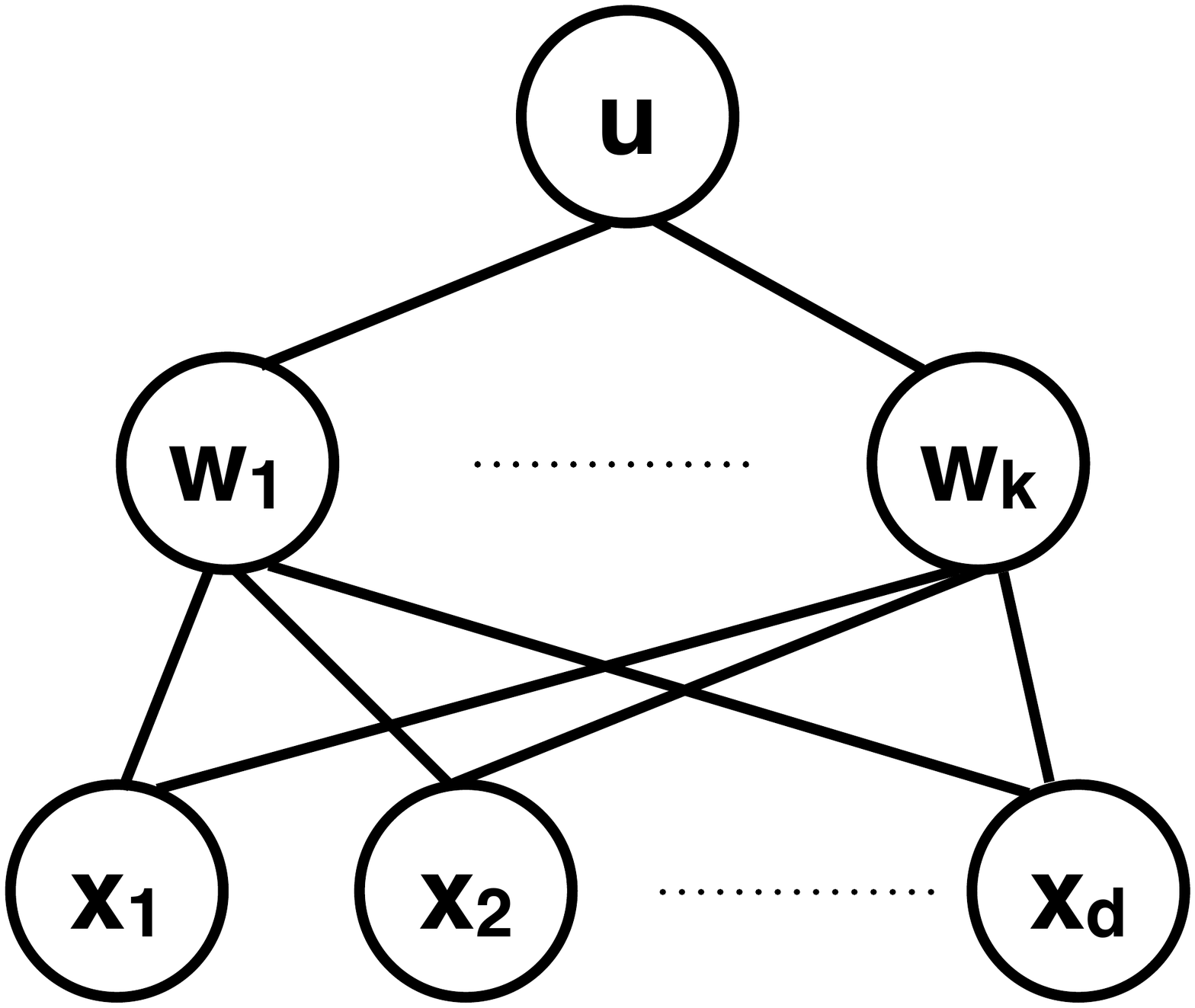} \hspace{0.5cm} 
\includegraphics[width=1.5in]{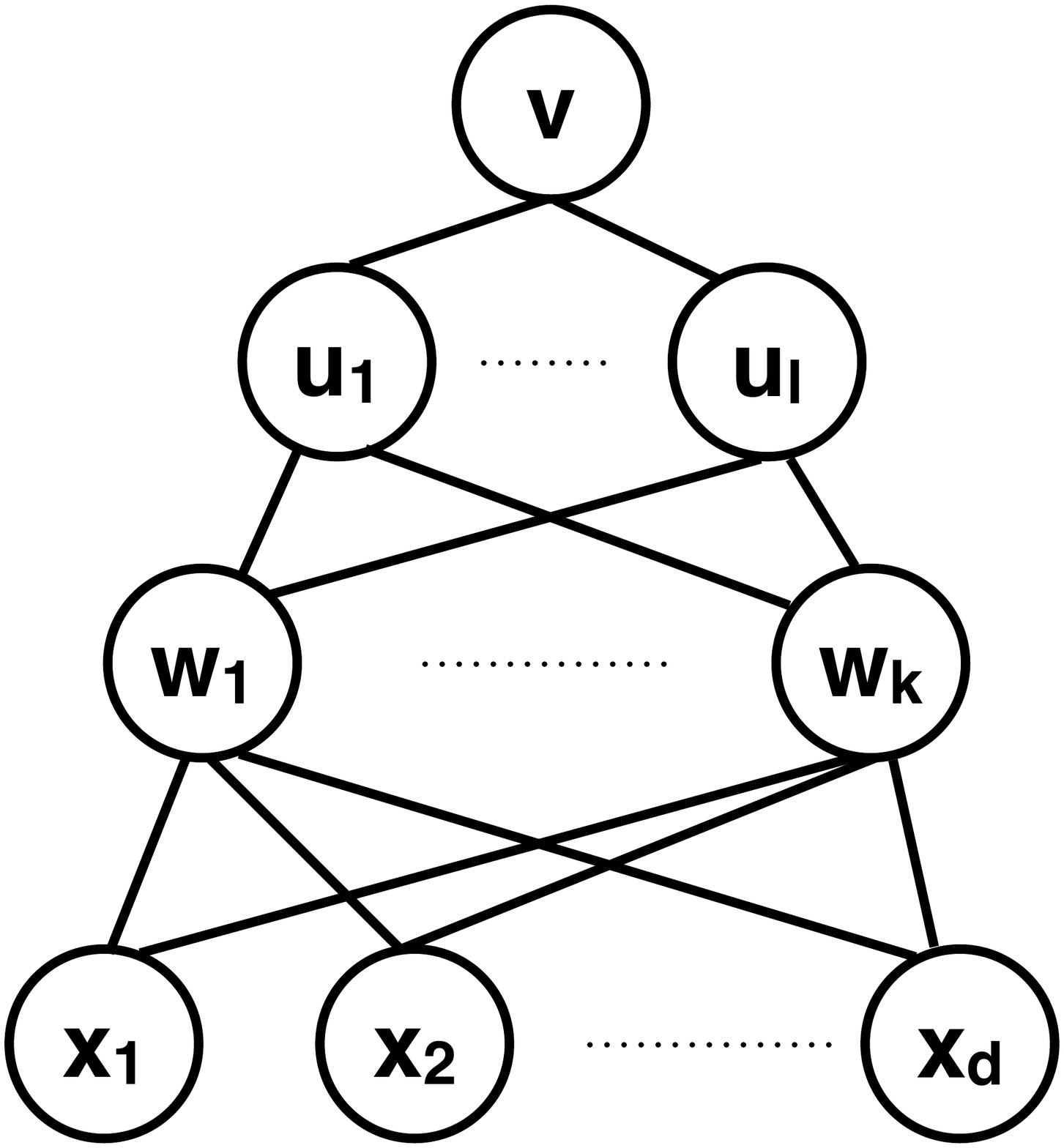} \hspace{0.5cm} 
\includegraphics[width=2in]{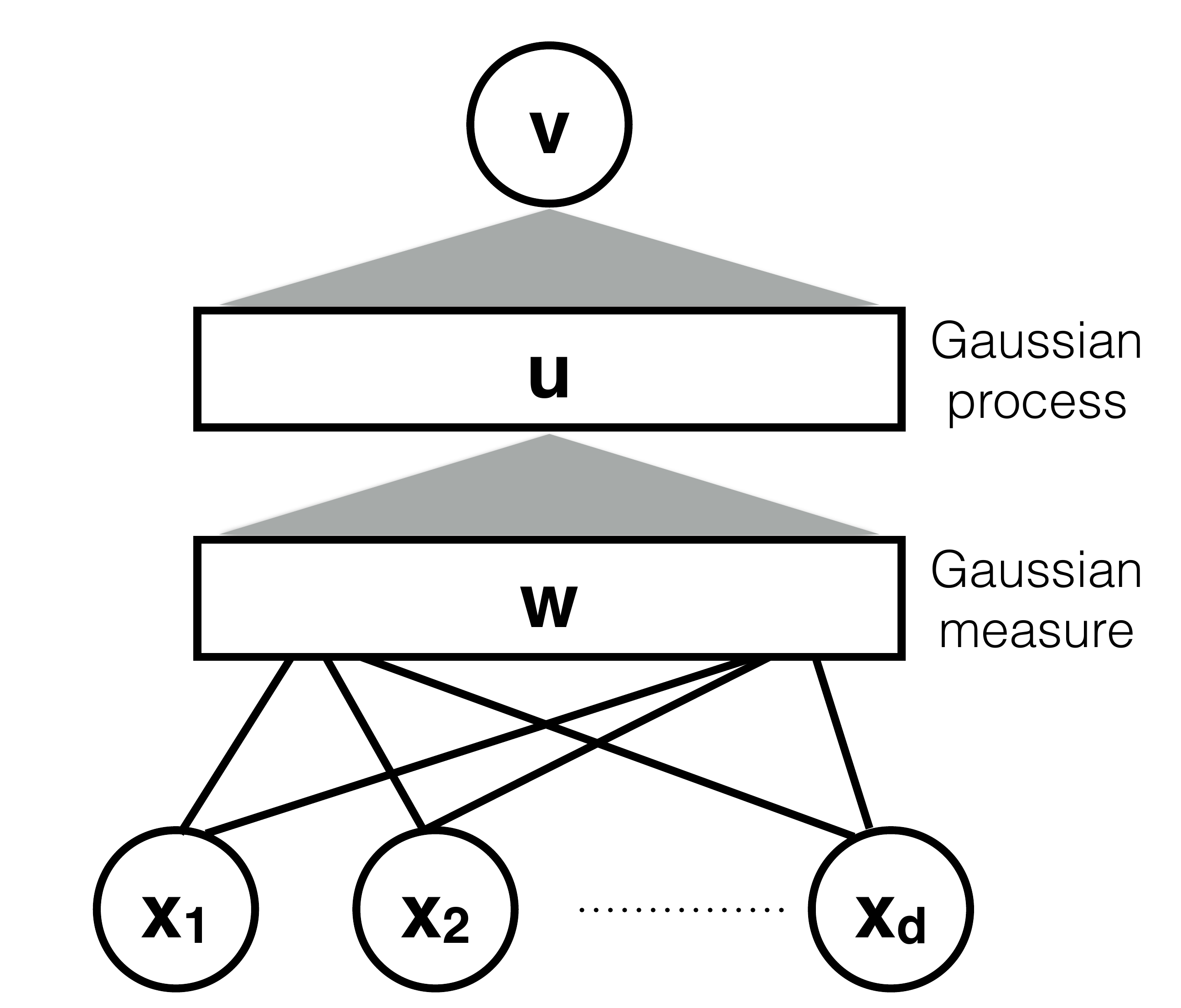}
}
\caption{\label{fig:nn} \small Left and middle images present finite neural networks with one and two intermediate layers, respectively. The right image depicts neural network with two infinitely wide intermediate layers, one is indexed by $w$ which consists of the functions $\phi_x(w)$ and the other indexed by $u$ and consists the functions $\psi_x(u)$. The functions $\phi_x(w)$ are associated with a Gaussian measure over $w$ and the functions $\psi_x(u)$ are associated with a Gaussian process over $u$.} 
\end{figure}

Neural networks form a successful framework for classification that imitate the activation function of neurons. Finite neural networks are usually described by a layered graph, see Figure \ref{fig:nn}. Its input layer consists of nodes that receive the input signal $x \in R^d$. Its subsequent layers consist of  parameters that encode the classification process: its intermediate layers consist of activation nodes. Each activation node rely on its parameter to produce a linear response $f(\lrangle{x,w})$ according to its parameters $w$. The function $f(t)$ is called an activation function or a transfer function and it introduces non-linearities to the network. Transfer functions imitate the neuron behavior, activating its value whenever its linear input $\lrangle{x,w}$ is high enough. There are various forms of non-linear transfer functions, e.g., step and rectified linear functions. 
Recently, the rectified linear function $\relu(t) = \max(0,t)$ was successfully used in neural networks as it carries the neuron signal better \cite{Hinton12}. Another popular transfer function is the step function $\step(t) = 1[t \ge 0]$ that attains the value one it $t \ge 0$ and zero otherwise. The output of a network with a single intermediate layer linearly weights the activations $f(\lrangle{x,w_1}), ..., f(\lrangle{x,w_k})$ with the output parameters $u_1,...,u_k$. Its classification is determined by the sign of $\sum_{i=1}^k u_i f(\lrangle{x,w_i})$.  

A classical result by Hornik asserts that networks with one intermediate layer are universal approximators when the number of activation units $k$ tends to infinity \cite{Hornik93}. Consequently, neural networks have been studied in the infinite setting \cite{Neal95, Williams97, Bengio05, Cho09, Liu14, Huang14}. In this setting there are infinitely many transfer functions $f(\lrangle{w,x})$ each of them is indexed by $w$. Summing over infinitely many transfer functions is formalized by integrating over possible $w$. Formally, we replace $\sum_i u_i f(\lrangle{x,w_i})$ with $\int u(w) f(\lrangle{x,w}) d \mu(w)$. The measure $\mu(w)$ may be any probability distribution over $R^d$ as long as this integral is finite, e.g, the Gaussian distribution $d \mu(w) = (2 \pi)^{-d/2} \exp(-\|w\|^2/2)$. 

When taking a discriminative approach, one learns a neural network that best describes the training data $S = \{(x_1,y_1), ..., (x_m,y_m)\}$, where $x_i$ is a data instance (e.g., an image or a sentence) and $y_i$ is its semantic label.  While learning an infinite network with a single intermediate layer, one needs to consider compact ways to represent the function $u(w)$. Kernel methods can be used for this task while representing the classifier by its dual \cite{Bengio05}. Particularly, the network's output is an inner product between $u(w)$ and an input-dependent function $\phi_x(w) = f(\lrangle{x,w})$ 
\[
\label{eq:G}
\lrangle{u, \phi_x}_\mu = \int u(w) f(\lrangle{x,w}) d \mu(w).
\]
Since $u(w)$ is trained over a finite space of feature functions it can be restricted without loss of generality to the linear span of the training feature functions $\phi_{x_1}(w), ..., \phi_{x_m}(w)$, namely $u(w) =  \sum_{i=1}^m \alpha_i \phi_{x_i}(w)$ for some real valued numbers $\alpha_1,...,\alpha_m$. Therefore, when evaluating the output value of the network $\lrangle{u, \phi_{x_j}}_\mu$ it suffices to compute the kernel entries
\[
k_f(x_i,x_j) = \lrangle{\phi_{x_i}, \phi_{x_j}}_\mu =  \int f(\lrangle{x_i,w}) f(\lrangle{x_j,w}) d \mu(w). \nonumber
\]
Various works have already computed the kernel function for different transfer functions with respect to the Gaussian measure, including the rectified linear and the sign function \cite{Williams97, Cho10}. In all these cases, the kernel has an analytic form although the features $\phi_x(w)$ are not finite vectors but functions over $R^d$. Explicitly, let $\rho_{i,j} = \lrangle{x_i,x_j}/\|x_i\| \|x_j\|$ then
\[
k_{\relu}(x_i,x_j) &=& \frac{\|x_i\| \|x_j\|}{\pi} \sin \big( \arccos (\rho_{i,j})   \big) + (\pi-\arccos (\rho_{i,j})) \rho_{i,j}. \nonumber \\
k_{\step}(x_i,x_j) &=& \pi-\arccos (\rho_{i,j}). \nonumber 
\] 

Whenever the measure is not Gaussian there is no analytic solution for the different kernels. Nevertheless, whenever the probability density function $d \mu(w)$ is log-concave (i.e., $\log(d \mu(w))$ is a concave function) then $\lrangle{x,w}$ is a log-concave function thus the sample complexity of the kernel function decays exponentially with the number of samples.

\section{Stochastic kernels for deep and infinitely wide neural networks}
\label{sec:deep}

A deep learning architecture considers multiple intermediate layers. Deep architectures have proven successful as they allow to express non-linearities easily. Unfortunately, when considering multiple infinite layers there are difficulties to represent the network parameters. Such difficulties do not appear when considering finite  layers since all parameters in all layers are vectors. However, when considering infinite layers, the parameters are functions (in the second intermediate layer) functions of functions (in the subsequent layer) and so on, see Figure \ref{fig:nn}. In the following we present the framework of learning with multiple intermediate layers. For the clarity of presentation we describe two intermediate layers. We refer to these networks as deep networks to differentiate them from the known networks with a single infinite layer. 

The main challenge in working with deep infinite networks is to establish the space in which the deep neurons exist. The neurons of the second intermediate layer take as input the functions $\phi_x(w)$, which are the output of the first intermediate layer, i.e., $\phi_x(w)$ for any $w \in R^d$. Therefore, each neuron in the second layer is a function $u:R^d \rightarrow R$ that weights its input values (which are $\phi_x(w)$ for any $w \in R^d$) in a linear manner $\lrangle{u,\phi_x}_{\mu}$. The output of each such neuron is the activation of the transfer function $\psi_x(u) = f(\lrangle{\phi_x,u}_{\mu})$. Therefore, the output layer of deep infinite network needs to take all its inputs, i.e., $\psi_x(u)$ for any function $u(\cdot)$, and weight their activation by $v(u)$. Thus the output layer computes the linear function $\lrangle{v,\psi_x}_{\nu}$. with respect the a measure $\nu$. Next, we determine the measure space of $v(u)$ in terms of stochastic processes.   

It is natural to consider the activation of neurons in the second intermediate layer $\lrangle{\phi_x, u}_{\mu}$ with respect to the measure $\mu(w)$ using probabilistic terms. This linear function is the covariance of two random variables $\lrangle{\phi_x, u}_{\mu} = E_{w \sim \mu} \big[ \phi_{x}(w) u(w)]$. Similarly, the activation of the output neuron is $\lrangle{v,\psi_x}_{\nu} = E_{u \sim \nu} \big[ \psi_{x}(u) v(u)]$. With this perspective, the functions $u:R^d \rightarrow R$ are chosen randomly according to the measure $\nu$. Equivalently, $\nu$ is a stochastic process. In our work we restrict ourselves to a Gaussian process, a stochastic process for which any finite collection of random variables $u(w_1), ..., u(w_k)$ has a multivariate Gaussian distribution. A Gaussian process is completely determined by its first and second order statistics. The mean function $\mu(w)$ of a Gaussian process is $E_{u \sim \nu}[u(w)]$. Its covariance function $C(w_1,w_2) = E_{u \sim \nu}[u(w_1),u(w_2)]$. We consider Gaussian process with zero mean function and a general covariance function, thus we denote $\nu = GP(C)$. 


To learn a deep infinite network that linearly separates the training examples it suffices use the stochastic kernel function: 
\[
\label{eq:k2}
k^{(2)}_f(x_i,x_j) = \lrangle{\psi_{x_i}, \psi_{x_j}}_\nu =  E_{u \sim GP(C)} \big[ f(\lrangle{\phi_{x_i},u}) f(\lrangle{\phi_{x_j},u}) \big]
\]
Recall that the first layer responses are the transfer functions $\phi_{x_i}(w) = f(\lrangle{w,x_i}), \phi_{x_j}(w) = f(\lrangle{w,x_j})$. Thus a stochastic kernel for deep infinite network averages non-linearities while considering their covariances. 

Although the Gaussian process has infinitely many random variables, its unique properties allows to compute the stochastic kernel function analytically. 
\begin{theorem}
\label{theorem:gp}
$$ k^{(2)}_f(x_i,x_j) = E_{(z_1,z_2) \sim N(0,\Sigma)} \big[ f(z_1) f(z_2) \big]$$ 
$z=(z_1,z_2)$ is a bivariate Gaussian random variable with zero mean and covariance matrix $\Sigma$:

\[
\label{eq:cov}
 \Sigma = E_{w_1,w_2}  
\Bigg( 
\begin{array}{cc} 
f(\lrangle{w_1,x_i}) C(w_1,w_2) f(\lrangle{w_2,x_i}) &  f(\lrangle{w_1,x_i}) C(w_1,w_2) f(\lrangle{w_2,x_j}) \\  \\
f(\lrangle{w_1,x_i}) C(w_1,w_2) f(\lrangle{w_2,x_j}) &  f(\lrangle{w_1,x_j}) C(w_1,w_2) f(\lrangle{w_2,x_j}) 
\end{array} 
\Bigg)
\]

$w_1,w_2$ are chosen independently from a $d-$dimensional multivariate Gaussian with zero mean and unit covariance, i.e., $N(0,I)$.  
\end{theorem}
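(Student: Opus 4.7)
The plan is to recognize that the pair $z_1 = \lrangle{\phi_{x_i}, u}_\mu$ and $z_2 = \lrangle{\phi_{x_j}, u}_\mu$ appearing inside the outer expectation in (\ref{eq:k2}) are themselves random variables (deterministic functionals of the random path $u$), and moreover that they form a zero-mean bivariate Gaussian vector. Granted this, the joint law of $(z_1,z_2)$ is fully determined by its covariance $\Sigma$, and the change-of-variable identity $E_{u \sim GP(C)}[f(z_1) f(z_2)] = E_{(z_1,z_2) \sim N(0,\Sigma)}[f(z_1) f(z_2)]$ delivers the theorem immediately.

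The first step is to establish joint Gaussianity of $(z_1,z_2)$. By definition of a Gaussian process, any finite linear combination $\sum_k a_k u(w_k)$ is Gaussian. Approximating the integrals $z_a = \int f(\lrangle{w,x_a}) u(w) d\mu(w)$ for $a \in \{i,j\}$ by Riemann-type sums on a dense grid in $R^d$ produces a sequence of bivariate Gaussian vectors converging in $L^2$ to $(z_1,z_2)$; since $L^2$ limits of jointly Gaussian vectors remain jointly Gaussian, bivariate Gaussianity follows. The mean vanishes because $E_u[u(w)] = 0$ and expectation commutes with the $\mu$-integration.

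The second step is to compute the entries of $\Sigma$. Exchanging the Gaussian process expectation with the double $\mu$-integration via Fubini, for $a,b \in \{i,j\}$ one gets $E[z_a z_b] = E_{w_1,w_2 \sim \mu}[f(\lrangle{w_1,x_a}) E_u[u(w_1) u(w_2)] f(\lrangle{w_2,x_b})] = E_{w_1,w_2}[f(\lrangle{w_1,x_a}) C(w_1,w_2) f(\lrangle{w_2,x_b})]$, which, with $\mu = N(0,I)$ as specified in the statement, reproduces exactly the entries of the covariance matrix (\ref{eq:cov}).

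The main obstacle is technical rather than conceptual: one must justify both the $L^2$ convergence that transfers Gaussianity from the finite linear combinations to the integral defining $z_a$, and the Fubini exchange used in the covariance computation. Both require mild integrability, essentially that $\int\int |f(\lrangle{w_1,x_a}) C(w_1,w_2) f(\lrangle{w_2,x_b})| d\mu(w_1) d\mu(w_2) < \infty$, which should be assumed implicitly so that the inner product $\lrangle{\phi_{x_a}, u}_\mu$ is well-defined $GP(C)$-almost surely. Apart from this regularity check, the argument is essentially the standard ``a linear functional of a Gaussian process is Gaussian'' fact, combined with a direct moment computation.
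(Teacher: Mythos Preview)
Your proposal is correct and follows essentially the same route as the paper: establish that $(z_1,z_2)$ is a zero-mean bivariate Gaussian by approximating the $\mu$-integrals with finite linear combinations of $u(w_k)$ (the paper phrases this via Riemann--Stieltjes sums and the characteristic function, you via $L^2$ limits), then identify $\Sigma$ by exchanging the $GP$-expectation with the double $\mu$-integral using Fubini. Your write-up is in fact a bit more explicit about the regularity assumptions needed for both steps, but the underlying argument is identical.
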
  
\begin{proof}
$z_1 = \lrangle{\phi_{x_i},u}$ is a Gaussian random variable with zero mean\footnote{This is a classical result and can be shown by working with the Riemann-Stieltjes integral, decomposing it to finite sums. Since any finite instantiation of a Gaussian process is a multivariate Gaussian random variable with zero mean, the Riemann-Stieltjes sum is also a Gaussian random variable, thus the limit (using the characteristic function) is also Gaussian.} Similarly, $z_2 = \lrangle{\phi_{x_j},u}$ is a Gaussian random variable and both $z_1,z_2$ are jointly Gaussian. Thus $z=(z_1,z_2)$ is a bivariate Gaussian random variable with zero mean and some covariance matrix $\Sigma$. The expected value of a Gaussian process reduces to     
$$E_{u \sim GP(C)} \big[ f(\lrangle{\phi_{x_i},u}) f(\lrangle{\phi_{x_j},u}) \big] = E_{(z_1,z_2) \sim N(0,\Sigma)} \big[ f(z_1) f(z_2) \big].$$
The covariance matrix of $\Sigma$ is a $2 \times 2$ matrix whose $(r,s)$ entry is $E_{z \in N(0,\Sigma)} E[z_r z_s]$. Recall that $z_1 = E_{w} [ \phi_{x_i}(w) u(w) ]$ and that $\Sigma_{11} = E[z_1^2]$, then 
\begin{eqnarray*}
\Sigma_{11} &=& E_{u} \Big[ E_{w_1}  [\phi_{x_i}(w_1) u(w_1) ] \cdot E_{w_2}  [\phi_{x_i}(w_2) u(w_2) ]  \Big] \\
&=& E_{w_1,w_2} \Big[ \phi_{x_i}(w_1)  \phi_{x_j}(w_2)  E_{u} [u(w_1) u(w_2) ]  \Big] = E_{w_1,w_2} \Big[ \phi_{x_i}(w_1)  \phi_{x_j}(w_2)  C(w_1, w_2)  \Big]. 
\end{eqnarray*}
We used Fubini's theorem to change the order of integration. The values of $\Sigma_{r,s}$ then follow in the same manner, while recalling that $\phi_{x_i}(w_1) = f(\lrangle{x_i,w_1})$ and $\phi_{x_j}(w_2) = f(\lrangle{x_j,w_2})$ . 
\end{proof}

An important family of Gaussian processes is described by shift-invariant covariance functions, namely $C(w_1,w_2) = c(w_1-w_2)$. Bochner's theorem represents such covariance functions as  $E_{w,b} [g(\lrangle{w_1,w}+b) g(\lrangle{w_2,w}+b)]$, where $w$ is drawn from a distribution $\rho$ over $R^d$, $b$ is drawn from the uniform distribution over $[0,2\pi]$ and $g(t) = \sqrt{2} \cos(t)$ \cite{Rahimi07}. Whenever $\rho$ is known we are able to efficiently compute a stochastic kernel for deep infinite networks and shift-invariant covariance functions: 
\begin{corollary}
Let $C(w_1,w_2) = c(w_1-w_2)$ be a shift-invariant covariance function and let $\rho$ be its corresponding measure derived by Bochner's theorem. Consider the $6 \times 6$ covariance matrix 
$$\hat \Sigma =
\left( 
\begin{array}{ccc} 
\|x_i\|^2             & \lrangle{x_i,w} &  \lrangle{x_i,x_j}  \\ 
\lrangle{x_i,w}  &  \|w\|^2             &  \lrangle{w,x_j}   \\
\lrangle{x_i,x_j}& \lrangle{x_j,w} &  \|x_j\|^2 
\end{array} 
\right)
\otimes 
\left( 
\begin{array}{cc} 
1 & 0\\
0 & 1
\end{array} 
\right).
$$   
$A \otimes B$ the tensor product of two matrices. 
Let $g(t) = \sqrt{2} \cos(t)$ and assume that $b$ is drawn form the uniform distribution over $[0,2\pi]$ and $w$ is drawn according to $\rho$ and $\hat z \sim N(0,\hat \Sigma)$ is a multivariate Gaussian. Then the covariance matrix $\Sigma$ of the stochastic kernel for deep infinite neural networks $k^{(2)}_f(x_i,x_j) = E_{(z_1,z_2) \sim N(0,\Sigma)}[ f(z_1) f(z_2) ]$ is 
$$ \Sigma = E_{w, b, \hat z}  
\Bigg( 
\begin{array}{cc} 
f(\hat z_1) f(\hat z_2)  g(\hat z_3  + b) g(\hat z_4  + b) &  f(\hat z_1) f(\hat z_6) g(\hat z_3  + b) g(\hat z_4  + b)   \\  \\
f(\hat z_1) f(\hat z_6) g(\hat z_3  + b) g(\hat z_4  + b)  &  f(\hat z_5) f(\hat z_6) g(\hat z_3  + b) g(\hat z_4  + b) 
\end{array} 
\Bigg)$$   
 \end{corollary}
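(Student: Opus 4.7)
The plan is to start from the formula for $\Sigma$ given by Theorem~\ref{theorem:gp}, substitute the Bochner representation of the shift-invariant covariance $C(w_1,w_2) = E_{w,b}[g(\lrangle{w_1, w} + b) g(\lrangle{w_2, w} + b)]$, and interchange the order of integration so that the remaining inner integral is purely over $w_1, w_2 \sim N(0,I)$. Concretely, each entry $\Sigma_{rs}$ from Theorem~\ref{theorem:gp} becomes $E_{w,b}\,E_{w_1,w_2}\!\left[f(\lrangle{x_r, w_1}) f(\lrangle{x_s, w_2}) g(\lrangle{w_1, w} + b) g(\lrangle{w_2, w} + b)\right]$, after a Fubini swap justified by standard integrability of the bounded transfer functions involved.

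Next I would observe that, conditional on $w$, the six linear functionals $\lrangle{x_i, w_1}, \lrangle{x_i, w_2}, \lrangle{w, w_1}, \lrangle{w, w_2}, \lrangle{x_j, w_1}, \lrangle{x_j, w_2}$ are jointly Gaussian with zero mean, as linear combinations of the standard Gaussians $w_1, w_2$. Identifying these with $\hat z_1, \ldots, \hat z_6$ respectively, I would compute their covariance matrix: for fixed $k \in \{1,2\}$, the three variables built from $w_k$ have the pairwise covariances $\|x_i\|^2,\lrangle{x_i,w},\lrangle{x_i,x_j},\|w\|^2,\lrangle{w,x_j},\|x_j\|^2$, which recovers the $3\times 3$ matrix $A$ in the corollary statement; across the two $k$'s, all covariances vanish because $w_1 \perp w_2$. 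Reading off indices, this block-diagonal-with-repeats pattern is exactly $\hat \Sigma = A \otimes I_2$ in the Kronecker ordering used by the corollary.

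Once this identification is made, the inner expectation over $w_1,w_2$ can be rewritten as an expectation over the auxiliary Gaussian vector $\hat z \sim N(0, \hat \Sigma)$ via the assignments $\lrangle{x_i, w_1} = \hat z_1$, $\lrangle{x_i, w_2} = \hat z_2$, $\lrangle{w, w_1} = \hat z_3$, $\lrangle{w, w_2} = \hat z_4$, $\lrangle{x_j, w_1} = \hat z_5$, $\lrangle{x_j, w_2} = \hat z_6$. Reading off each of the four entries of $\Sigma$ under this substitution yields exactly the matrix of expectations claimed in the corollary, with the two $g$-factors tied to $\hat z_3, \hat z_4$ (and the common shift $b$) and the two $f$-factors tied to whichever of $\hat z_1, \hat z_2, \hat z_5, \hat z_6$ match the $(r,s)$ being computed.

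The main obstacle I anticipate is purely bookkeeping: matching the convention chosen for the Kronecker product to the chosen ordering of $\hat z_1, \ldots, \hat z_6$, so that the factor $I_2$ in $\hat \Sigma$ enforces independence between the $w_1$-based and $w_2$-based triples while preserving the within-block correlations inherited from the inner products among $x_i, x_j, w$. Once that ordering is pinned down, the remainder of the argument is direct substitution plus the single Fubini step at the outset.
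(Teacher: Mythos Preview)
Your proposal is correct and follows essentially the same route as the paper: substitute the Bochner representation into the covariance formula of Theorem~\ref{theorem:gp}, identify the six linear functionals $\hat z_1=\lrangle{w_1,x_i},\,\hat z_2=\lrangle{w_2,x_i},\,\hat z_3=\lrangle{w_1,w},\,\hat z_4=\lrangle{w_2,w},\,\hat z_5=\lrangle{w_1,x_j},\,\hat z_6=\lrangle{w_2,x_j}$ as a zero-mean Gaussian vector, and compute $\hat\Sigma$ entrywise to recover the Kronecker form. The paper's proof is terser (it omits the explicit Fubini swap and just exhibits one sample entry of $\hat\Sigma$), but the argument is the same.
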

\begin{proof}
The entries of the covariance matrix of the stochastic kernel are derived in Equation (\ref{eq:cov}) using $\Sigma_{r,s} = h_{r,s}(\hat z_1,..., \hat z_6)$ where $z_1 = \lrangle{w_1,x_i}, z_2 = \lrangle{w_2,x_i}, z_3 = \lrangle{w_1,w}, z_4 = \lrangle{w_2,w}, z_5 = \lrangle{w_1,x_j}, z_6 = \lrangle{w_2,x_j}$. Since $w_1,w_2 \sim N(0,I)$ are independent then $\hat z$ is a multivariate Gaussian with zero mean and its distribution is fully determined by its covariance matrix $\hat \Sigma$. The corollary then follows by direct computation of the covariance matrix, e.g.,, $\hat \Sigma_{1,3} = E_{\hat z} [\hat z_1 \hat z_3] = \sum_{r,s} E_{w_1} [w_{1,r} w_{1,s} x_{i,r} w_s] = \sum_{r,s}  x_{i,r} w_s \cdot E_{w_1} [w_{1,r} w_{1,s}]$ and $E_{w_1} [w_{1,r} w_{1,s}] = 1[r=s]$ is the indicator function that equals one if $r=s$ and zero otherwise.  
\end{proof}
The ability to realize the measure $\rho$ that is suggested by Bochner's theorem determines the validity of this approach. Bochner's theorem relates a shift-invariance function $c(w_1-w_2)$ to Fourier transform, thus $\rho$ can be recovered by its inverse-transform. There are some special covariance functions for which this measure is known. For example, the covariance function $C(w_1,w_2) = \exp(-\|w_1-w_2\|_1)$ that relates to the Ornstein-Uhlenbeck Gaussian process can be computed using the Cauchy distribution $d \rho(w) = \prod_{i=1}^d (\pi(1+ w_i^2))^{-1}$. Whenever the covariance function defines a a squared exponential Gaussian process, $C(w_1,w_2) = \beta \exp(-\|w_1-w_2\|_2^2)$, the stochastic kernel for deep neural networks can be computed analytically. This follows from the observation that the Gaussian process couples the independent $d-$dimensional Gaussians random variables $w_1,w_2$ to a $2d-$dimensional Gaussian variable $w=(w_1,w_2)$ with correlation $\alpha$: 
\begin{corollary}
Consider the covariance function $C(w_1,w_2) = (1+2\alpha)^{1+d/2} \exp(-\alpha \|w_1 - w_2\|^2/2)$. Consider the $4 \times 4$ covariance matrix 
$$\hat \Sigma =
\left( 
\begin{array}{cc} 
\|x_i\|^2               & \lrangle{x_i,x_j} \\
\lrangle{x_i,x_j}   & \|x_j\|^2
\end{array} 
\right)
\otimes 
\left( 
\begin{array}{cc} 
1+\alpha & \alpha \\
\alpha     & 1+\alpha
\end{array} 
\right)
$$   
Then the covariance matrix $\Sigma$ of the stochastic kernel for deep neural network $k^{(2)}_f(x_i,x_j) = E_{(z_1,z_2) \sim N(0,\Sigma)}[ f(z_1) f(z_2) ]$ is 
$$ \Sigma = E_{\hat z \sim N(0,\hat \Sigma)}  
\Bigg( 
\begin{array}{cc} 
f(\hat z_1) f(\hat z_2)  &  f(\hat z_1) f(\hat z_4)   \\
f(\hat z_1) f(\hat z_4)  &  f(\hat z_3) f(\hat z_4)  
\end{array} 
\Bigg)$$   
Moreover, $k^{(2)}_{\relu}(x_i,x_j)$ and $k^{(2)}_{\step}(x_i,x_j)$ have analytic forms.  
\end{corollary}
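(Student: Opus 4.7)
The approach is to absorb the squared-exponential covariance function $C(w_1,w_2)$ into the underlying Gaussian measure of Theorem \ref{theorem:gp}, thereby turning the independent standard Gaussians $w_1,w_2$ into a jointly Gaussian pair on $R^{2d}$, from which the claimed $4\times 4$ covariance $\hat\Sigma$ drops out by linearity. Concretely, each entry of $\Sigma$ in Equation (\ref{eq:cov}) is a double integral of the form $E_{w_1,w_2\sim N(0,I)}[f(\langle w_1,x_a\rangle)\,C(w_1,w_2)\,f(\langle w_2,x_b\rangle)]$, and the first step is to combine the three Gaussian factors $N(w_1;0,I)\,N(w_2;0,I)\,\exp(-\alpha\|w_1-w_2\|^2/2)$ into a single Gaussian density over the stacked vector $(w_1,w_2)\in R^{2d}$ by completing the square.

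Completing the square shows that the combined precision matrix is the Kronecker product of a $2\times 2$ block with diagonal $1+\alpha$ and off-diagonal $-\alpha$ against $I_d$. Its determinant factors as $((1+\alpha)^2-\alpha^2)^d=(1+2\alpha)^d$, and inverting the $2\times 2$ block yields a $2d\times 2d$ covariance equal to $\frac{1}{1+2\alpha}$ times the matrix with diagonal $1+\alpha$ and off-diagonal $\alpha$, tensored with $I_d$. Tracking the Gaussian determinant against the normalization prefactor $(1+2\alpha)^{1+d/2}$ inside $C$, and exploiting the positive homogeneity of $f$ (via $\relu(cz)=c\,\relu(z)$ or $\step(cz)=\step(z)$ for $c>0$) to absorb the residual factor of $1+2\alpha$ into the covariance, recasts the double integral as an expectation under $N(0,\Sigma_w)$ where $\Sigma_w$ has the claimed Kronecker structure with diagonal $1+\alpha$ and off-diagonal $\alpha$.

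With $(w_1,w_2)$ now recognized as a jointly Gaussian vector on $R^{2d}$, the quadruple $\hat z=(\langle w_1,x_i\rangle,\langle w_2,x_i\rangle,\langle w_1,x_j\rangle,\langle w_2,x_j\rangle)$ is jointly Gaussian by linearity, and direct computation of $E[\hat z_r\hat z_s]$ using $E[w_{1,r}w_{1,s}]=(1+\alpha)\,1[r=s]$ and $E[w_{1,r}w_{2,s}]=\alpha\,1[r=s]$ (and so on) reproduces precisely the tensor-product form of $\hat\Sigma$ asserted in the corollary. Substituting $\phi_{x_a}(w_b)=f(\hat z_{\cdot})$ back into Theorem \ref{theorem:gp} then recovers the stated expectation formula for $\Sigma$.

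For the final analyticity claim I would invoke the closed forms of $k_{\relu}$ and $k_{\step}$ displayed in the background section. Because $\relu$ and $\step$ are positive-homogeneous, a bivariate Gaussian second moment of them depends only on the normalized correlation of the input pair and admits the arc-cosine expression already derived there. Applying this once to each entry of $\Sigma$ as a function of the ratios built from $\hat\Sigma$, and a second time to compute $E_{(z_1,z_2)\sim N(0,\Sigma)}[f(z_1)f(z_2)]$, yields a nested closed-form expression for $k^{(2)}_{\relu}(x_i,x_j)$ and $k^{(2)}_{\step}(x_i,x_j)$. I expect the main obstacle to be bookkeeping rather than a new idea: aligning the exponent $1+d/2$ in the normalization of $C$ with the Gaussian determinant $(1+2\alpha)^{d/2}$ that emerges from the $2d$-dimensional integral, and correctly invoking the homogeneity of $f$ so that the residual scalar is cleanly absorbed and $\hat\Sigma$ appears without extraneous prefactors.
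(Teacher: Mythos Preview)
Your proposal follows essentially the same route as the paper: multiply the two standard Gaussian densities in $w_1,w_2$ by $C(w_1,w_2)$, recognize the product as (a constant times) a single $2d$-dimensional Gaussian density with the Kronecker covariance, project linearly to the four scalars $\hat z_r=\langle w_\cdot,x_\cdot\rangle$, and then invoke the arc-cosine closed forms twice to obtain $k^{(2)}_{\relu}$ and $k^{(2)}_{\step}$. The paper's proof does exactly this, writing $g_I(w_1)g_I(w_2)C(w_1,w_2)=(1+2\alpha)\,g_{\tilde\Sigma}(w)$ with $(1+2\alpha)\tilde\Sigma=\bigl(\begin{smallmatrix}1+\alpha&\alpha\\\alpha&1+\alpha\end{smallmatrix}\bigr)\otimes I_d$, then passing to $\hat z$ and applying the bivariate formula $h(\sigma_1,\sigma_2,\rho)$ recursively.

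You are in fact more careful than the paper about the leftover scalar $(1+2\alpha)$ and the $\tfrac{1}{1+2\alpha}$ in $\tilde\Sigma$; the paper silently drops this when passing from $\tilde\Sigma$ to the stated $\hat\Sigma$. Your homogeneity device is the right way to reconcile them for $\relu$ (degree one) and, up to an overall constant that cancels in the correlation $\rho$ fed to $h$, also for $\step$ (degree zero). That bookkeeping is exactly the ``main obstacle'' you anticipated, and it is the only place your write-up needs to be tightened.
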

\begin{proof}
Considering Equation (\ref{eq:cov}) we note that $g_{I}(w_1)g_I(w_2)C(w_1,w_2) = (1+2\alpha) g_{\hat \Sigma}(w)$, where $g_I(w_i)$ is the $d-$dimensional Gaussian density function $N(0,I)$ and $g_{\hat \Sigma}(w)$ is the $2d-$dimensional Gaussian density function $N(0,\tilde \Sigma)$. We denote by $A \otimes B$ the tensor product of two matrices, thus 
$$(1+2\alpha) \tilde \Sigma =
\left( 
\begin{array}{cc} 
1+\alpha & \alpha \\
\alpha     & 1+\alpha
\end{array} 
\right)
\otimes I_{d \times d} 
$$   
The form of $\hat \Sigma$ is attained when setting $\hat z_1 = \lrangle{w_1,x_i}$, $\hat z_2 = \lrangle{w_2,x_i}$, $\hat z_3 = \lrangle{w_1,x_j}$, $\hat z_4 = \lrangle{w_2,x_j}$. With this notation, the form of $\Sigma$ is a direct consequence of Equation (\ref{eq:cov}). 

To compute the entries of $\Sigma$ when $f(t) = \relu(t)$ we recall that whenever $z'_1, z'_2 \in N(0, \Sigma')$ with $\Sigma'_{11} = \sigma_1^2$, $\Sigma'_{12} = \Sigma'_{21} = \rho \sigma_1 \sigma_2$, $\Sigma'_{22} = \sigma_2^2$ then $E_{z'_1, \hat z'_2} [f(z'_1) f(z'_2)] = h(\sigma_1,\sigma_2, \rho)$ and $h(\sigma_1,\sigma_2, \rho) = \frac{\sigma_1 \sigma_2}{\pi} \sin(\arccos(\rho)) + \rho(\pi - \arccos(\rho))$. Then $\Sigma_{\relu} =$
$$ 
\Bigg( 
\begin{array}{cc} 
h(\sqrt{1+\alpha}\|x_i\|, \sqrt{1+\alpha}\|x_i\|,\frac{\alpha}{1+\alpha})  &  
h(\sqrt{1+\alpha}\|x_i\|, \sqrt{1+\alpha}\|x_j\|,\frac{\alpha}{1+\alpha} \cdot \frac{\lrangle{x_i,x_j}}{\|x_i| \|x_j\|})  \\
h(\sqrt{1+\alpha}\|x_i\|, \sqrt{1+\alpha}\|x_j\|,\frac{\alpha}{1+\alpha} \cdot \frac{\lrangle{x_i,x_j}}{\|x_i| \|x_j\|})  &  
h(\sqrt{1+\alpha}\|x_j\|, \sqrt{1+\alpha}\|x_j\|,\frac{\alpha}{1+\alpha})  
\end{array} 
\Bigg)$$   
Thus, $k^{(2)}_{\relu}(x_i,x_j) = E_{(z_1,z_2) \sim N(0,\Sigma)}[ f(z_1) f(z_2) ]$ is a recursive application of $h( \cdot)$ with the appropriate parameters.     

To compute the entries of $\Sigma$ when $f(t) = 1[t\ge 0]$ we recall that whenever $z'_1, z'_2 \in N(0, \Sigma')$ with $\Sigma'_{11} = \sigma_1^2$, $\Sigma'_{12} = \Sigma'_{21} = \rho \sigma_1 \sigma_2$, $\Sigma'_{22} = \sigma_2^2$ then $E_{z'_1, \hat z'_2} [f(z'_1) f(z'_2)] = h(\rho) = \pi - \arccos(\rho)$. Then 
$$ 
\Sigma_{\step} =
\Bigg( 
\begin{array}{cc} 
h(\frac{\alpha}{1+\alpha})  &  
h(\frac{\alpha}{1+\alpha} \cdot \frac{\lrangle{x_i,x_j}}{\|x_i| \|x_j\|})  \\
h(\frac{\alpha}{1+\alpha} \cdot \frac{\lrangle{x_i,x_j}}{\|x_i| \|x_j\|})  &
h(\frac{\alpha}{1+\alpha})  
\end{array} 
\Bigg)$$   
As before, $k^{(2)}_{\step}(x_i,x_j) = E_{(z_1,z_2) \sim N(0,\Sigma)}[ f(z_1) f(z_2) ]$ is a recursive application of $h( \cdot)$ with the appropriate parameters.     
\end{proof}

Deep neural networks are usually applied to multiclass problems, where there are more than two labels to classify. Thus the label space resides in a discrete set $y \in \{1,...,K\}$. For notational convenience we focus above on binary classification, where $y \in \{-1,1\}$ is determined by the sign of the output layer $\lrangle{v,\phi_x}$. In multiclass setting, a data-instance $x$ can belong to any of the $K$ classes. A standard extension of the above setting to multiclass learning is to introduce $K$ decision boundaries $v_1(u),...,v_k(u)$. Multiclass prediction is performed by choosing the decision which is most certain, i.e., $\arg \max_i \lrangle{v_i,\psi_x}$. In  the next section we describe the generalization properties of deep infinite neural networks in the multiclass setting.

\section{Deep infinite networks, generalization and experimental validation}
\label{sec:theory}

The practice of neural networks proves that they do not overfit, even when the number of learned parameters is orders of magnitude larger than the number of training examples. In the following we address this scenario while suggesting some insight for why infinite networks generalize well. We show that generalization is mostly dependent on the expressive power of the output layer, which is regularized by its capacity. Consider a multiclass deep infinite network $v_1(u),...,v_k(u)$ that classifies the functions $\psi_{x_i}(u)$ according to the most certain linear response function $y_v(x) = \arg \max_i \lrangle{v_i,\psi_{x}}$. Since each decision function $v_i(u)$ interacts linearly with the training data, it must be a finite sum of these functions, i.e., $v_i(u) = \sum_{j=1}^m \alpha_{i,j} \psi_{x_j}(u)$. Therefore, as long as the functions $\psi_{x_i}(u)$ are simple (e.g., truncated linear functions in the case of ReLU units), the capacity of the deep infinite network is limited by the size of the training data. Whenever there are stronger guarantees on the data, i.e., that the training data is separable with a margin, they translate to a stronger regularization on $v_k(u)$ that is derived from the passive-aggressive learner \cite{Crammer06}. To be more precise, we say that the data is separable when there are functions $v_1^*(u),...,v_k^*(u)$ that classifies correctly any data instance. Formally, for any data-label pair $(x,y)$ there holds $y=y_{v^*}(x)$. These data-label pairs are separated with a margin if $y = y_{v^*}(x)$ and $\lrangle{v^*_y,\psi_x}  \ge 1 + \max_{i \ne y} \lrangle{v^*_i,\psi_x}$. In this setting, the kernel version of the passive-aggressive algorithm ensures that $v_i(u) = \sum_{j=1}^{t} \alpha_{i,j} \psi_{x_j}(u)$, where $t \le R^2 \sum_i \|v^*_i\|^2$ and $\|\psi_{x}\|^2 \le R$. Thus, whenever there is a separation with margin and the training size $m \gg t$ the passive-aggressive analysis ensures that the deep learner has restricted capacity thus a simple form. 

Unfortunately, the separable setting rarely exists in practice. Nevertheless, deep learners perform well in the non-separable setting. Usually deep learning schemes use the logistic regression framework, that maximizes the conditional probability of the training data $S=\{(x_1,y_1),..., (x_m,y_m)\}$. The conditional probability follows the Gibbs distribution $p_v(y|x) = \exp(\lrangle{v_y,\psi_x})/Z(v)$ where $Z(v) = \sum_i \exp(\lrangle{v_i,\psi_x})$ is the partition function. Thus the parameters of the network are learned by the optimization program: 
\[
\label{eq:lr}
v^S = \arg \max_v \frac{1}{m} \sum_{i=1}^m \log p_v(y_i|x_i) + \frac{\lambda_m}{2} \sum_j \|v_j\|^2
\]
As this is an infinite convex program it is appealing to consider its dual. The dual program is $\min_{\alpha} \sum_{i,k} \alpha_{i,k} \log \alpha_{i,k} + \sum_{i=1}^m \|v_i(\alpha)\|^2 / 2 \lambda_m$ where $v(\alpha) = \sum_{i} (\psi_{x_i} - (\sum_k \alpha_{i,k} \psi_{x_k}))$ and $\sum_k \alpha_{i,k} = 1$. The dual program is smooth and strongly convex, therefore enjoys rapid convergence, i.e., with $O(\log(1/\epsilon))$ updates to the elements $\alpha$ a dual exponentiated coordinate  descent achieves an $\epsilon-$optimal dual solution \cite{Collins08}. Although this algorithm achieves a good primal solution in practice, it does not guarantee that $v(\alpha)$ is an $\epsilon-$optimal primal solution as well. Recently, many efficient algorithms were devised to achieve both dual and primal guarantee with $O(\log (1/\epsilon))$ steps (cf. \cite{Roux12}). These algorithms aggregate data points $\psi_{x_i}$ to their separators $v(u)$ therefore after a small number of steps a good, yet simple separator is reached. Said differently, although different separators may exist around $v^S(u)$ the algorithm outputs a fairly simple separator as it is regularized by an early stopping criterion.  

Considering the learning problem in Equation (\ref{eq:lr}) as a loss minimization task, it measures the average log-loss given training data. By the above, the empirical risk minimizer  $v^S$ is simple, i.e., it consists of $O(\log(1/\epsilon))$ functions $\psi_x(u)$. We turn to show that this simple empirical risk minimizer also generalizes well, it achieves a similar log-loss even when the data-label pairs are sampled from their true distribution in the world. 
\begin{theorem}
Assume that $\| \phi_x \| \le 1$ and that the training examples are sampled independently from the data-label generating distribution $(x,y) \sim D$. Denote log-risk by $L_D(v) = E_{(x,y) \sim D} p_{v}(y|x)$ and the empirical risk by $L_S(v) = \frac{1}{m} \sum_{i=1}^m \log p_{v}(y_i|x_i)$. Consider $v^S$ as defined in Equation (\ref{eq:lr}), then $| L_D(v^S) - L_S(v^S) | \le 1/m \lambda_m$. 
\end{theorem}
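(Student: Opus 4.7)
The plan is to prove this via the algorithmic stability framework of Bousquet and Elisseeff. The three ingredients needed are: (i) strong convexity of the regularized objective in Equation~(\ref{eq:lr}), (ii) Lipschitz continuity of the per-example log-loss with respect to $v$, and (iii) the standard reduction from uniform stability to generalization in expectation.

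First, I would observe that the regularizer $\tfrac{\lambda_m}{2}\sum_j\|v_j\|^2$ is $\lambda_m$-strongly convex while the empirical log-likelihood term is concave (equivalently, the negative log-loss is convex), so the objective being maximized is $\lambda_m$-strongly concave and possesses a unique maximizer $v^S$. Next, a direct gradient computation for the softmax Gibbs model gives $\nabla_{v_k} \log p_v(y\mid x) = (\mathbf{1}[k{=}y] - p_v(k\mid x))\,\psi_x$, so $\|\nabla_v \log p_v(y\mid x)\| \le c\,\|\psi_x\|$ with an absolute constant $c$; by the assumption $\|\phi_x\|\le 1$ (and, after the Gaussian-process composition of Theorem~\ref{theorem:gp} with a bounded $f$, the corresponding bound $\|\psi_x\|\le 1$), the per-example loss is $1$-Lipschitz in $v$.

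Now form $v^{S'}$, the maximizer of the same objective when a single training pair in $S$ is replaced by an i.i.d.\ copy. Writing the first-order optimality conditions for $v^S$ and $v^{S'}$ and exploiting $\lambda_m$-strong concavity along the segment joining them yields
\begin{equation*}
\lambda_m \|v^S - v^{S'}\|^2 \;\le\; \bigl\langle \nabla \ell_{i}(v^{S'}) - \nabla \ell_{i'}(v^{S'}),\; v^S - v^{S'} \bigr\rangle / m,
\end{equation*}
where $\ell_i$ denotes the log-loss on the differing example. Applying Cauchy-Schwarz and the Lipschitz bound on both gradients produces the uniform stability estimate $\|v^S - v^{S'}\| \le 2/(m\lambda_m)$, and hence by Lipschitzness $|\log p_{v^S}(y\mid x) - \log p_{v^{S'}}(y\mid x)| \le 2/(m\lambda_m)$ for every $(x,y)$.

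Finally, I would invoke the symmetric-argument identity $\mathbb{E}_S[L_D(v^S) - L_S(v^S)] = \mathbb{E}_{S,(x_i,y_i)'}[\ell(v^{S'},(x_i,y_i)) - \ell(v^S,(x_i,y_i))]$, whose right-hand side is controlled by the stability bound, giving the stated $1/(m\lambda_m)$ (up to the constant absorbed into the normalization $\|\psi_x\|\le 1$). The main obstacle is the bookkeeping of constants: in particular, tightening the Lipschitz constant of the multiclass log-loss and verifying that the composition of $f$ with the Gaussian-process layer indeed preserves $\|\psi_x\|\le 1$, since a loose constant would produce a bound of the form $c/(m\lambda_m)$ rather than the clean $1/(m\lambda_m)$ claimed.
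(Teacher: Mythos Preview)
Your proposal is correct and follows essentially the same route as the paper: the paper's own proof simply invokes the stability framework of Bousquet--Elisseeff and Shalev-Shwartz et al., checks that $-\log p_v(y\mid x)$ is convex, and asserts it is $1$-Lipschitz because the gradient is bounded by $1$ when $\|\psi_x\|\le 1$; you have merely written out the replace-one stability computation that the paper delegates to the citations. Your caveats about the constant (the raw stability argument naturally yields $2/(m\lambda_m)$) and about the hypothesis $\|\phi_x\|\le 1$ versus the actually needed $\|\psi_x\|\le 1$ are accurate observations---the paper's proof itself silently switches to $\|\psi_x\|\le 1$ without deriving it from the stated assumption.
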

\begin{proof}
Generalization by stability for convex and Lipschitz loss functions with strongly convex regularizer was established in \cite{Bousquet02, Mukherjee06, Shalev10}. Although the technical details are obscured in some of these results, we rely on their derivations (specifically \cite{Bousquet02} Theorem 22 and \cite{Shalev10} Theorem 2). The benefit of working with stability is that its basic concepts, convexity and Lipschitz continuity, readily generalize to infinite spaces. To apply generalization via stability to multiclass logistic regression we note that $-\log p_v(y|x)$ is convex. Also, it is $1-$Lipschitz since its gradient is uniformly bounded by $1$ whenever $\|\psi_x\| \le 1$. 
\end{proof}
The regularization ratio $\lambda_m$ is chosen such that $m \lambda_m$ goes to zero as $m$ tends to infinity. The important conclusion of the above theorem is that infinite models does not necessarily overfit, as long as the infinite model interacts in a constrained manner with the data. In our case the infinite model is constrained by convexity and Lipschitz continuity. These two properties stabilize the learning procedure while ensuring that small changes in $v$ do not change the prediction by much. 

Next, we turn to experimentally validate our framework. The effectiveness of infinite network with a single infinite layer using the kernels $k_{\relu}(x_i,x_j)$ was already demonstrated by \cite{Cho09, Cho10, Cheng11}. Thus in the following we show that our stochastic kernels $k^{(2)}_{\relu}(x_i,x_j)$ with a squared exponential Gaussian process improves upon $k_{\relu}(x_i,x_j)$. We run our kernels over MNIST digit database. This dataset is the standard entry point of neural networks and kernel methods. 

Our stochastic kernel $k^{(2)}_{\relu}(x_i,x_j)$ was able to separate the training data completely with only $50$ iterations, while $k_{\relu}(x_i,x_j)$ that encodes a single infinite layer did not (it nearly separated all examples). This validates the assertion that the stochastic kernel is more expressive than the single layer kernel. As for test results, the average error over all digits is $1.7\%$ for the stochastic kernel and $1.9\%$ for the single layer kernel. Although the improvement is modest in terms of the overall success rate (only $0.2\%$) it might be insightful to compare it to the possible gain over the errors of the single layer kernel, namely  $0.2/1.9$ which is about $10\%$ gain. Lastly, since our kernel function is computed analytically, it is trained as fast as any kernel method.

\section{Non-linearities}
\label{sec:cnn}

The infinite layers, presented in Section \ref{sec:background} and Section \ref{sec:deep}, are limited in their expressive power. In the first intermediate layer, the inner product $\lrangle{x,w}$ is performed for any $w \in R^d$, while each parameter $w$ acts globally on all input entries $x \in R^d$ linearly. Similarly, in the second layer, the function $u(w)$ acts linearly and globally on every $\phi_x(w) = f(\lrangle{w,x})$. These interactions ignore spatial information in the vectors $x$ or the feature function $\phi_x(w)$, spatial information that is important in computer vision and language processing applications. Current deep learning architectures exploit spatial information using convolutions. These convolutions are applied to patches in an image, or equivalently to overlapping subsets of the data instance $x$, and recursively to their functions. These operations introduce important aspects of non-linearity and locality. Our approach can be extended to deal with such operations, thus able to increase the expressiveness of our approach to various non-linearities.  

To describe a convolution-based operation in the first intermediate layer, we transform the data instance $x \in R^d$ to subsets of its elements $x^{(1)},....,x^{(P)} \in R^{d_1}$, where $x^{(p)} \subset x$. For each such subset we learn infinitely many responses $w \in R^{d_1}$, while each response outputs $\phi_{x,p}(w) = f(\lrangle{w,x^{(p)}})$. Thus, $\phi_{x} = (\phi_{x,1},...,\phi_{x,P})$ is a $P-$dimensional function, $\phi_{x}:R^{d_1} \rightarrow R^P$. Note that in Section \ref{sec:background} the feature function $\phi_x$ mapped $R^d$ to $R$. 

Convolution based operations in the second intermediate layer may also be applied. The feature function $\phi_x$ is transformed to subsets of its elements $\phi_x^{(1)},..., \phi_x^{(Q)}$ where $\phi_x^{(q)} \subset \phi_{x}$, i.e., $\phi_x^{(q)}: R^{d_1} \rightarrow R^{d_2}$ that is attained by restricting to $\phi_{x}(w)$ to some of its coordinates. Each of these subsets is weighted by $u:R^{d_1} \rightarrow R^{d_2}$ and its resulting response is $\psi_{x,q}(u) = f(\lrangle{u,\phi_{x}{^{(q)}}})$, while $\lrangle{u,\phi_{x}{(q)}} = E_{w} [ \lrangle{u(w),\phi_{x}^{q}(w)} ]$ and the latter inner product $ \lrangle{u(w),\phi_{x}^{q}(w)}$ is between two vectors in $R^{d_2}$. 

The above two constructions show how to integrate convolution-type non-linearities in deep infinite networks. The appropriate kernels follow a straight forward derivation of these higher dimension constructions.

\section{Related work}
\label{sec:related}

Neural networks, kernel methods and Gaussian processes have had a significant impact on the machine learning community and a full exposition of these methods can be found in machine learning textbooks on neural networks \cite{Bengio09}, kernel methods \cite{Scholkopf99} and Gaussian processes \cite{Rasmussen06}. 

Neural networks are attracting a considerable attention in the last few years. Their practical success is unmatched in several machine learning applications (e.g., \cite{Hinton12}). In recent years it was possible to construct deep learning architectures with considerable number of parameters that is significantly larger than the number of training examples. Surprisingly, these networks avoid overfitting. Several machine learning theories were devised to explain how deep networks avoid overfitting based on dropouts (e.g., \cite{Wager13, Maaten13}). Our approach is different since we represent neural networks with significant amount of parameters as infinite networks with multiple layers. We encode the neurons responses in functions, while each layer increases the complexity of its functions, namely the first layer consists of functions over the Euclidean space and the second layer consists of Gaussian processes. We avoid overfitting since our algorithm achieves an almost optimal solution with a few steps, thus our resulting classifier is simple to represent and regularized by early stopping. We provide a generalization bound for our classifier based on stability \cite{Bousquet02, Mukherjee06, Shalev10}.   

Infinite neural networks were introduced by \cite{Neal95, Williams97} in the context of Bayesian learning. They analyze the predictive probability of a neural network with an infinitely wide intermediate layer. In particular, when the transfer function is bounded, this predictive probability converges to a Gaussian process. When resolving the covariance function of this process, \cite{Williams97} realized the kernel $k_{\erf}(x_i,x_j)$ along with other kernel functions. This work differs from ours in a few respects. First, our work does not consider the predictive probability of labels given data but rather we aim at maximizing the likelihood of infinitely wide layers, a task that initially was supposed to overfit and generalize poorly \cite{Williams97}. We establish the prediction power of our approach using stability. Second, we build on multiple intermediate layers while trying to analyze the success of deep learning architectures, as opposed to \cite{Neal95, Williams97}. Lastly, our work considers Gaussian processes differently than \cite{Williams97}. We use Gaussian process to define a measure over our second (e.g., deep) intermediate layer. 

More recently, researchers explored different algorithms to learn infinite neural networks \cite{Bengio05}. This work formulates learning an infinite network as an infinite convex program and devise an incremental algorithm that is based on its dual representation. \cite{Rahimi09} suggest to optimize an infinite networks with a single layer using randomization to decrease the computational complexity of the learning algorithm. Our work addresses other properties of learning infinite networks, mainly Gaussian processes for constructing multiple layers and analyze how infinite networks avoid overfitting. 

Kernel methods for infinite neural networks are further explored in \cite{Cho09, Cho10}. These works introduce the kernels $k_{\step}(x_i,x_j), k_{\relu}(x_i,x_j)$ along with other kernels thus augment the works of \cite{Neal95, Williams97}. Moreover, they introduce kernel composition approach to simulate deep architecture. Our work differs in the way we address and analyze deep architectures of infinite networks. We construct deep layers that use as input their previous layer using Gaussian processes. Connections between kernel methods and Gaussian processes were left as an open problem in \cite{Cho10}. We also introduce a way to incorporate non-linearities and invariances such as convolutional neural networks in our framework, another open problem raised by \cite{Cho10}. In addition, we analyze why our networks avoid overfitting. \cite{Cheng11} demonstrate the effectiveness of these kernels in language processing. 

In our work we provide unbiased estimate to our kernels in the second layer using Bochner's theorem. These kernels consider a shift invariant covariance function. \cite{Rahimi07} suggest the same estimator for kernel functions in the context of random features in kernel methods. \cite{Le13} suggested improved methods to reduce the variance of these estimates. Such estimators were recently used within kernel methods to match deep learning results in language processing \cite{Liu14, Huang14}.

\section{Discussion}
Deep neural networks are successful in machine learning applications although the number of their  parameters is orders of magnitude larger than the number of training examples. In this work we explain this behavior using deep infinite neural networks. We construct stochastic kernels that rely on Gaussian processes to encode such networks. We also explain how to introduce locality and non-linearity to such networks, similarly to the ones introduced by convolution neural networks. Lastly, we provide generalization bounds and regularity conditions that explain why these networks do not overfit. We present our framework with only two intermediate layers mainly for simplicity. It can be extended to any depth but the higher layers may not use non-linearities. The problem of finding analytic forms of stochastic kernels that encode arbitrarily deep layers with non-linearities is largely open.   

The work combines mostly separate areas in machine learning, including kernel methods, neural networks and Gaussian processes. As such, there are many direction that still need to be explored. Importantly, which non-linearities are significant in deep infinite networks and whether they can be learned from data. What probabilities best fit this framework and are there other properties of stochastic processes, besides of covariance, that control learning?

\bibliographystyle{plain}
\bibliography{HazJaa-arXiv15}

\end{document}